\documentclass[conference]{IEEEtran}
\IEEEoverridecommandlockouts

\usepackage{cite}
\usepackage{amsmath,amssymb,amsfonts}
\usepackage{graphicx}
\usepackage{textcomp}
\usepackage[table]{xcolor}
\usepackage{booktabs}
\usepackage{array}
\usepackage{multirow}
\usepackage{tikz}
\usetikzlibrary{positioning,arrows.meta,calc,fit,backgrounds}
\usepackage{algorithm}
\usepackage{algpseudocode}
\PassOptionsToPackage{hyphens}{url}
\usepackage[hidelinks]{hyperref}
\newcommand{\repourl}{{\def\UrlFont{\ttfamily\mdseries\footnotesize}%
\url{https://github.com/BennyLinntu/PACT-Pairwise-Anchored-Calibrated-Tuning-for-Single-Token-Typed-Decisions}}}

\definecolor{cBase}{HTML}{8D96A0}
\definecolor{cNimble}{HTML}{EE8A2E}
\definecolor{cCEonly}{HTML}{8C62C4}
\definecolor{cPACT}{HTML}{0E8C86}
\definecolor{cCE}{HTML}{4A4F55}
\definecolor{cCF}{HTML}{3B7DD8}
\definecolor{cPC}{HTML}{EE8A2E}
\definecolor{cNR}{HTML}{E0584C}
\definecolor{cEMD}{HTML}{8C62C4}

\definecolor{hdr}{HTML}{1F3A5F}
\definecolor{subhdr}{HTML}{E4ECF6}
\definecolor{rowgray}{HTML}{F2F3F5}
\definecolor{pactrow}{HTML}{DDF1EE}
\definecolor{bestc}{HTML}{B9E4DC}
\definecolor{sigcol}{HTML}{FFE3A8}
\definecolor{hurtB}{HTML}{F3AFA7}
\definecolor{hurtA}{HTML}{FADAD5}
\definecolor{helpA}{HTML}{D6EFDD}
\definecolor{helpB}{HTML}{A6DBB6}
\definecolor{vGood}{HTML}{1E8E5A}
\definecolor{vPart}{HTML}{3B7DD8}
\definecolor{vMixed}{HTML}{D98A1C}
\newcommand{\hd}[1]{\textcolor{white}{\textbf{#1}}}
\newcommand{\best}[1]{\cellcolor{bestc}\textbf{#1}}
\newcommand{\sigc}[1]{\cellcolor{sigcol}\textbf{#1}}
\newcommand{\gainc}[1]{#1}
\newcommand{\worse}[1]{\textcolor{cNR}{\textbf{#1}}}
\DeclareRobustCommand{\swatch}[1]{\textcolor{#1}{\rule[-0.5pt]{6.5pt}{6.5pt}}}
\newcommand{\verdict}[2]{{\setlength{\fboxsep}{1.5pt}\colorbox{#1}{\textcolor{white}{\scriptsize\textbf{\strut#2}}}}}

\newcommand{\Lce}{\mathcal{L}_{\mathrm{CE}}}
\newcommand{\Lcf}{\mathcal{L}_{\mathrm{CF}}}
\newcommand{\Lpc}{\mathcal{L}_{\mathrm{PC}}}
\newcommand{\Lnr}{\mathcal{L}_{\mathrm{NR}}}
\newcommand{\Lemd}{\mathcal{L}_{\mathrm{EMD}}}
\newcommand{\JSD}{\mathrm{JSD}}
\newcommand{\softplus}{\mathrm{softplus}}
\newcommand{\run}[1]{#1}
\newtheorem{proposition}{Proposition}

\begin{document}

\title{PACT: Pairwise-Anchored Calibrated Tuning\\
for Single-Token Typed Decisions}

\author{\IEEEauthorblockN{Yida Lin}
\IEEEauthorblockA{Victoria University of Wellington\\
Wellington, New Zealand\\
linyida@myvuw.ac.nz}}

\maketitle

\begin{abstract}
Single-token typed-decision models answer a schema question by reading the
logits of a few one-letter answer codes at a single position: they are fast
and return a probability for every allowed answer, but they are trained with
plain cross-entropy that ignores most of the structure in their training data.
We study such a model whose data is curated as \emph{contrastive pairs}---two
contexts that differ in one edited fact that flips the answer---each carrying a
machine-checked certificate that deleting the decisive sentence makes the fact
unknown. We propose \textbf{PACT}, which turns this structure into four
training terms that need no new annotation: a difference-in-differences margin
over each pair that is invariant to any shared logit offset, a
permutation-consistency term against answer-code position bias, an
evidence-necessity term on certificate-verified ablated contexts, and an
ordinal transport cost for rubric fields, plus a three-parameter contextual
temperature. On a frozen 324-item holdout with three seeds, PACT matches the
published recipe in accuracy (84.6 vs.\ 85.2\%; McNemar $p\ge0.50$ at every
seed) while giving the lowest position bias of all runs (answer flips under
relabelling 9.8 vs.\ 13.8\%) and the lowest ordinal error on rubric fields
(MAE 0.232 vs.\ 0.311). Against a control with the same optimiser and schedule
but cross-entropy only, PACT is significantly more accurate at two of three
seeds, halves the seed-to-seed spread and lowers NLL by 26\%. Seed-matched
ablations and pre-specified falsification tests locate these gains precisely:
no single term raises raw accuracy, and the method's value lies in robustness
and stability rather than headline accuracy. Code, data splits, trained
adapters and all run records are available at \repourl.
\end{abstract}

\begin{IEEEkeywords}
large language models, typed decisions, contrastive data, counterfactual
supervision, position bias, calibration, parameter-efficient fine-tuning
\end{IEEEkeywords}

\section{Introduction}
\label{sec:intro}

A \emph{typed decision} is a classification whose label space is supplied at
inference time by a schema: pick one of five outcomes, answer a true/false
question, or place a case on a five-level rubric. Such decisions can be served
efficiently by rendering the context and schema once, assigning each allowed
answer a one-token code, reading the logits of those codes at one position, and
normalising over them~\cite{rogge2026,nimble2026}. There is no generated JSON to
parse and no sampling, and every allowed answer receives a probability.

The training side of this interface is less settled. The open Nimble
recipe~\cite{nimble2026} fine-tunes a 9B-parameter base model with
LoRA~\cite{hu2022lora} on 2{,}676 curated examples using cross-entropy over the
candidate logits. Its curation, however, is far richer than its objective.
Every example belongs to a \emph{contrastive pair} produced by editing at most
eight words of one evidence sentence so that the reference answer flips, and
every pair carries a certificate that records the decisive fact and verifies
that removing either focus sentence leaves that fact \emph{unknown}. The
objective sees each example on its own: the pairing, the certificate and the
arbitrariness of the letter assigned to each answer never enter the loss.

This paper asks what a training objective looks like when it does use that
structure. Our premise is that the most informative supervision in contrastive
data lies not in either example but in the \emph{difference} between
them---and in the counterfactual absence of the evidence---neither of which
cross-entropy can express. We make three contributions.

\begin{enumerate}
\item \textbf{A structure-derived objective (PACT).} Four terms that use only
  structure already present in the data (Section~\ref{sec:method}): a
  difference-in-differences margin whose value is provably unchanged by any
  logit offset the pair shares (Proposition~\ref{prop:offset}); a
  Jensen--Shannon consistency term across answer-code relabellings; an
  evidence-necessity term on certificate-verified ablated contexts, which the
  released recipe discards; and a squared earth-mover cost for ordinal rubric
  fields. A three-parameter contextual temperature conditions calibration on
  the number of choices and the prompt length. Serving prompt, answer codes
  and adapter format are unchanged, so PACT is a drop-in replacement.
\item \textbf{A controlled, falsifiable evaluation.} We reproduce the published
  recipe inside one harness, add a control that shares PACT's optimiser but
  not its loss terms, run every leave-one-out ablation, state in advance which
  observation would falsify each claim (Section~\ref{sec:hypotheses}), and report
  paired significance tests rather than leaderboard deltas.
\item \textbf{An honest account of what the terms buy.} PACT does not raise
  accuracy over the published recipe. It does produce the lowest position bias
  and ordinal error of all runs, stabilises an aggressive optimiser that on its
  own is significantly worse, and makes calibration a cheap post-hoc step
  (Section~\ref{sec:results}); the ablations show which terms carry which effect
  and where the method does not help.
\end{enumerate}

Fig.~\ref{fig:overview} summarises the method.

\begin{figure*}[t]
\centering
\resizebox{\textwidth}{!}{%
\begin{tikzpicture}[
  font=\footnotesize, >={Stealth[length=4pt]},
  view/.style={draw=#1!80!black, fill=#1!10, rounded corners=3pt, align=left,
               text width=4.4cm, inner sep=4pt, minimum height=1.0cm, line width=0.6pt},
  loss/.style={draw=#1!85!black, fill=#1!10, rounded corners=3pt, align=left,
               text width=5.5cm, inner sep=4pt, minimum height=0.76cm, line width=0.6pt},
  lab/.style={font=\footnotesize\bfseries, text=black!75},
  arr/.style={->, line width=0.6pt, draw=black!55},
]
\node[lab] at (2.4,2.75) {(1) Views built from one curated pair};
\node[view=cCF] (xa) at (2.4,1.7)
  {\textbf{$x_a$ base:} context containing evidence sentence $e$; reference $y_a$};
\node[view=cCF] (xb) at (2.4,0.5)
  {\textbf{$x_b$ counterfactual:} $e$ edited to $e'$ ($\le$8 words); reference $y_b\neq y_a$};
\node[view=cPC] (xp) at (2.4,-0.7)
  {\textbf{$x_a^{\pi}$ relabelled:} same text, answer codes permuted (A,B,C $\to$ C,A,B)};
\node[view=cNR] (xo) at (2.4,-1.9)
  {\textbf{$x^{\circ}$ ablated:} focus sentence deleted; certificate says fact is \emph{unknown}};

\node[lab, text=cPACT!70!black] at (7.75,2.75) {(2) Shared scorer};
\node[draw=cPACT!80!black, fill=cPACT!8, rounded corners=4pt, line width=0.8pt,
      minimum width=3.4cm, minimum height=4.7cm] (llm) at (7.75,-0.1) {};
\node[align=center, text width=3.1cm] at (7.75,0.95)
  {\textbf{Base LLM + LoRA}\\[3pt]
   render context + schema,\\ read the $C$ code-token\\ logits at one position};
\node[draw=cPACT!80!black, fill=white, rounded corners=2pt, align=center,
      text width=2.8cm, inner sep=3pt] (z) at (7.75,-1.25)
  {canonical logits\\ $z(x)\in\mathbb{R}^{C}$\\[1pt]
   {\scriptsize code map $\kappa$ undone}};
\foreach \v in {xa,xb,xp,xo}{\draw[arr] (\v.east) -- (\v.east -| llm.west);}

\node[lab] at (13.6,2.75) {(3) Training objective};
\node[loss=cCE]  (lce)  at (13.6,1.95) {$\Lce$: \;$-\log p_y$ on $x_a$ and $x_b$};
\node[loss=cCF]  (lcf)  at (13.6,1.0)  {$\Lcf=\softplus(m-d)$, \;$d$: log-odds shift over the pair};
\node[loss=cPC]  (lpc)  at (13.6,0.05) {$\Lpc=\JSD\big(p(x_a)\,\|\,p(x_a^{\pi})\big)$};
\node[loss=cNR]  (lnr)  at (13.6,-0.9) {$\Lnr=\big(z_{y_a}(x^{\circ})-z_{y_b}(x^{\circ})\big)^{2}$};
\node[loss=cEMD] (lemd) at (13.6,-1.85) {$\Lemd$: squared CDF distance on Score fields};
\coordinate (fan) at ($(llm.east |- z.east)+(0.3,0)$);
\draw[line width=0.6pt, draw=black!55] (z.east) -- (fan);
\fill[black!55] (fan) circle (1.2pt);
\foreach \l in {lce,lcf,lpc,lnr,lemd}{\draw[arr] (fan) to[out=0,in=180] (\l.west);}

\node[lab] at (18.95,2.75) {(4) Serving};
\node[draw=cPACT!80!black, fill=cPACT!16, rounded corners=3pt, align=center,
      text width=3.45cm, inner sep=4pt, line width=0.7pt] (temp) at (18.95,1.05)
  {\textbf{Contextual temperature}\\[2pt]
   $T(x)=\softplus\big(a+b\log C$\\ $+\,c\log(L/1000)\big)$\\[2pt]
   {\scriptsize 3 parameters, fitted on\\ held-out training families}};
\node[draw=black!45, fill=white, rounded corners=3pt, align=center,
      text width=3.45cm, inner sep=4pt, line width=0.6pt] (serve) at (18.95,-1.35)
  {\textbf{Single-pass decision}\\[2pt] $p=\mathrm{softmax}\big(z/T(x)\big)$\\[1pt]
   {\scriptsize prompt, codes, adapter unchanged}};
\draw[arr] (temp.south) -- (serve.north);
\draw[arr, dashed] (llm.south) -- ++(0,-0.35) -| (serve.south)
  node[pos=0.27, below, font=\scriptsize, text=black!60] {after training, the same scorer is served};
\end{tikzpicture}}
\caption{\textbf{PACT at a glance.} (1) Each curated pair yields two labelled
views (base, counterfactual), a copy of the base with its answer codes
relabelled, and a certificate-verified ablated context with the decisive
sentence removed. (2) All views share one forward pass per view through the
LoRA-adapted model and are mapped to canonical answer order. (3) Four terms are
added to cross-entropy, each using structure the released recipe discards.
(4) A three-parameter temperature, fitted on training-side families, calibrates
the served single-pass distribution. The colours of the views and loss terms
are used consistently throughout the paper.}
\label{fig:overview}
\end{figure*}

\section{Related Work}
\label{sec:related}

\textbf{Serving typed decisions.} Scoring candidate-token logits at a single
position instead of generating and parsing structured text underlies the
single-token decision interface described for this model
family~\cite{rogge2026,nimble2026}; MiniCheck~\cite{tang2024minicheck} applies
the same idea to one Boolean grounding decision. These works focus on the
serving interface and data; we keep both fixed and change only the objective.

\textbf{Counterfactual and contrastive supervision.} Counterfactually augmented
data~\cite{kaushik2020counterfactual} shows that minimal label-flipping edits
reduce reliance on spurious features, and contrast sets~\cite{gardner2020contrast}
use the same construction for evaluation. Both usually feed edited examples to
an unchanged objective. Teney et~al.~\cite{teney2020counterfactual} supervise
the input-gradient direction between counterfactual pairs; our margin instead
supervises the \emph{output} geometry of a pair and is exactly invariant to a
shared offset, in the spirit of difference-in-differences
estimation~\cite{angrist2009mostly} and pairwise ranking
losses~\cite{burges2005ranknet}. Evidence removal is standard for
\emph{evaluating} rationales~\cite{deyoung2020eraser}; we use certified
removals as a \emph{training} constraint.

\textbf{Option-order and label bias.} LLM multiple-choice answers depend on
option order and label identity~\cite{zheng2024selectors,pezeshkpour2024order,robinson2023mcsb},
and on label priors more generally~\cite{zhao2021calibrate}. Debiasing is
typically applied at inference by permuting and averaging or by
re-normalising. We additionally train against the bias with a bounded
Jensen--Shannon consistency loss~\cite{lin1991divergence}, analogous to the
consistency term of AugMix~\cite{hendrycks2020augmix}, which keeps single-pass
serving---the reason this model family exists---sound.

\textbf{Calibration.} Temperature scaling~\cite{guo2017calibration} is the
standard post-hoc method, with richer maps such as Dirichlet
calibration~\cite{kull2019dirichlet}; LLM confidence is known to be
miscalibrated in task-dependent ways~\cite{jiang2021know,kadavath2022know}. We
keep the temperature-scaling objective but let the temperature depend on two
observable prompt statistics, which preserves every argmax.

\textbf{Ordinal targets and efficient tuning.} Ordered labels benefit from
losses that respect the order~\cite{frank2001ordinal}; squared earth-mover
losses~\cite{hou2017emd} are one such choice, which we apply to rubric fields
whose expected level the serving path already reports. The adapter uses
LoRA~\cite{hu2022lora} with rank-stabilised scaling~\cite{kalajdzievski2023rslora}
and LoRA+ learning-rate ratios~\cite{hayou2024loraplus}; a
QLoRA~\cite{dettmers2023qlora} configuration is provided for 24\,GB GPUs.

\section{Problem Setting}
\label{sec:setting}

\subsection{Single-token typed decisions}
\label{sec:typed}
A field has allowed answers $v_1,\dots,v_C$ with $1\le C\le26$. The prompt
renders the context, the schema and the requested field, and shows answer
$v_k$ under a code letter. With $h$ the final-position hidden state and $t_k$
the token of the $k$-th code, the model produces $z_k=\langle w_{t_k},h\rangle$
and serves $p=\mathrm{softmax}(z_1,\dots,z_C)$. Fields are of three kinds:
\emph{Choice} (an enumeration), \emph{Boolean} (true/false) and \emph{Score}
(an ordered rubric with levels $0,\dots,C-1$). Two properties matter below.
First, the map from a \emph{meaning} to a \emph{letter} is arbitrary, so any
dependence of $p$ on the letter is error. Second, since one position is scored,
the decision is a linear read-out of one hidden state, which makes the
geometry of $z$---differences between candidate logits---the natural object to
supervise.

\subsection{Contrastive curation with certificates}
\label{sec:curation}
The training file contains 1{,}338 families of exactly two records: a base
context and a counterfactual that differs in one edited evidence sentence, with
the reference answer flipped and a byte-identical question and criteria block.
Each record carries a certificate naming the two sentences that jointly
determine the focus fact, their base and edited wording, and the verified
statement that with either focus sentence removed the focus fact becomes
\texttt{unknown}. The released recipe trains on the two records independently
and does not construct the removed-evidence contexts, correctly noting that
``missing evidence'' has no reference label. Our observation is that the
absence of a label is not the absence of information: the certificate
establishes \emph{indifference}, which is a constraint on the logit geometry
rather than a target. Fig.~\ref{fig:example} shows a real pair from the
training file and the three views PACT derives from it.

\begin{figure*}[t]
\centering
\resizebox{\textwidth}{!}{
\begin{tikzpicture}[
  font=\footnotesize, >={Stealth[length=4pt]},
  box/.style={draw=#1!80!black, fill=#1!8, rounded corners=3pt, align=left,
              inner sep=4pt, line width=0.6pt},
  turn/.style={align=left, text width=8.9cm, inner sep=1.2pt, font=\scriptsize},
  arr/.style={->, line width=0.6pt, draw=black!55},
]
\node[box=cBase, text width=9.3cm, minimum height=5.05cm] (ctx) at (0,0) {};
\node[anchor=north west, font=\footnotesize\bfseries, text=black!75]
  at ($(ctx.north west)+(0.12,-0.08)$) {Context: seven turns, identical in both members except turn 6};
\node[turn, anchor=north west] (t1) at ($(ctx.north west)+(0.15,-0.55)$)
  {\textbf{1 Account reviewer:} a \$24 retention offer is recorded and valid for the disputed Sept.~14 annual renewal.};
\node[turn, anchor=north west] (t2) at (t1.south west)
  {\textbf{2 Communications reviewer:} before renewal, the subscriber accepted the offer on condition that the credit be applied.};
\node[turn, anchor=north west] (t3) at (t2.south west)
  {\textbf{3 Invoice reviewer:} invoice INV-8841 omitted the \$24 credit.};
\node[turn, anchor=north west] (t4) at (t3.south west)
  {\textbf{4 Subscriber request:} if the credit is ineligible, cancel the annual plan, move to monthly, refund the unused term.};
\node[turn, anchor=north west, fill=cCF!14, rounded corners=2pt] (t5) at ($(t4.south west)+(0,-0.06)$)
  {\textbf{5 Audit reviewer} (focus): ``Audit record AR-7319 states that its field E records the number of
   calendar days between the subscriber's disputed September 14 annual renewal and the correction request
   in the current contact.''};
\node[turn, anchor=north west, fill=cCF!14, rounded corners=2pt] (t6) at ($(t5.south west)+(0,-0.06)$)
  {\textbf{6 Audit record} (focus, edited): ``Field E in audit record AR-7319 contains the value
   \colorbox{cPACT!25}{\strut\textbf{6}}\,/\,\colorbox{cNR!25}{\strut\textbf{9}}.''};
\node[turn, anchor=north west] (t7) at ($(t6.south west)+(0,-0.06)$)
  {\textbf{7 Payment reviewer:} exactly one settled charge for the renewal; no second charge.};

\node[box=cCE, text width=6.7cm, anchor=north west] (q) at ($(ctx.north east)+(0.35,0)$)
  {\textbf{Choice field} ``Choose the authorized resolution'' (policy: credit only if the
   correction request is $\le$7 days after renewal):\\[1pt]
   {\scriptsize A~apply\_credit\_keep\_annual \quad B~execute\_fallback\_cancellation\\
   C~none\_of\_above \quad D~route\_duplicate\_payment\_review\\
   E~seek\_intent\_clarification}};

\node[box=cCF, text width=6.7cm, anchor=north west] (va) at ($(q.south west)+(0,-0.16)$)
  {\textbf{$x_a$ (turn 6 = 6 days):} focus fact \emph{request within 7 days} is
   \textcolor{cPACT!80!black}{\textbf{supported}} $\Rightarrow$ $y_a$ = A};
\node[box=cCF, text width=6.7cm, anchor=north west] (vb) at ($(va.south west)+(0,-0.12)$)
  {\textbf{$x_b$ (turn 6 = 9 days):} focus fact \textcolor{cNR!85!black}{\textbf{refuted}}
   $\Rightarrow$ $y_b$ = B \quad {\scriptsize(one word edited)}};
\node[box=cNR, text width=6.7cm, anchor=north west] (vo) at ($(vb.south west)+(0,-0.12)$)
  {\textbf{$x^{\circ}$ (turn 6 deleted):} certificate verifies the focus fact becomes
   \textbf{unknown} $\Rightarrow$ $\Lnr$ asks $z_A(x^\circ)\approx z_B(x^\circ)$ and says
   nothing about C, D, E, which the other turns still constrain (e.g.\ turn 7 excludes D).};

\draw[arr, draw=cCF!80!black] (t6.east) to[out=0, in=180] (va.west);
\draw[arr, draw=cCF!80!black] (t6.east) to[out=0, in=180] (vb.west);
\draw[arr, draw=cNR!80!black] (t6.east) to[out=0, in=180] (vo.west);
\end{tikzpicture}}
\caption{\textbf{A curated contrastive pair from the training file} (family
\texttt{scale-diverse-013-002}, commerce domain; turns 1--4 and 7 abridged, focus
sentences verbatim). Changing one token of the second focus sentence (6$\to$9 days)
flips the certified focus fact and the reference answer from A to B; deleting that
sentence makes the fact \emph{unknown}. Cross-entropy sees $x_a$ and $x_b$ as two
unrelated examples; PACT uses the pair ($\Lcf$), a relabelled copy ($\Lpc$) and the
ablated context ($\Lnr$).}
\label{fig:example}
\end{figure*}

\subsection{Notation}
\label{sec:notation}
For an example $x$ with $C$ allowed answers, $z(x)\in\mathbb{R}^C$ denotes the
candidate logits in \emph{canonical} order (the order in which the schema
declares the answers) and $p(x)$ the corresponding distribution. A pair is
$(x_a,x_b)$ with references $y_a\neq y_b$ from the same canonical set;
$x^\circ$ is $x_a$ with one focus evidence sentence deleted; $\pi$ is a
permutation of code positions and $z^{\pi}$ the canonical logits obtained when
the prompt is rendered under $\pi$.

\section{Method}
\label{sec:method}

\subsection{Canonical choice space}
\label{sec:canon}
Every loss and metric is computed after mapping code positions back to
canonical answer indices. Per view, the encoder records a vector $\kappa$ with
$\kappa_k$ the canonical index of the answer shown at code position $k$; logits
are scattered from code space into canonical space and unused positions are
masked to $-\infty$. This makes the pair terms well defined when the two
members are rendered under different permutations, and makes ``the model
changed its answer when the letters changed'' a measurable event.

\subsection{Counterfactual difference-in-differences margin}
\label{sec:cf}
For a pair define
\begin{equation}
d(x_a,x_b)=\big[z_{y_a}(x_a)-z_{y_b}(x_a)\big]+\big[z_{y_b}(x_b)-z_{y_a}(x_b)\big],
\label{eq:did}
\end{equation}
i.e.\ the change of the $y_a{:}y_b$ log-odds between the two members, and
penalise $\Lcf=\softplus(m-d)$.

\begin{proposition}[Offset invariance]
\label{prop:offset}
For any $c\in\mathbb{R}^C$, replacing $z(x_a)$ by $z(x_a)+c$ and $z(x_b)$ by
$z(x_b)+c$ leaves $d$ unchanged. Moreover, if both members are correct under
the binary restriction to $\{y_a,y_b\}$, then $d>0$.
\end{proposition}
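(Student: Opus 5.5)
The plan is a direct computation on the definition \eqref{eq:did}, treating the two claims separately.

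\textbf{Offset invariance.} Write $z'(x_a)=z(x_a)+c$ and $z'(x_b)=z(x_b)+c$. Then substitute these into \eqref{eq:did} one bracket at a time.

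In the first bracket, $z'_{y_a}(x_a)-z'_{y_b}(x_a)=z_{y_a}(x_a)-z_{y_b}(x_a)+(c_{y_a}-c_{y_b})$. In the second bracket, $z'_{y_b}(x_b)-z'_{y_a}(x_b)=z_{y_b}(x_b)-z_{y_a}(x_b)+(c_{y_b}-c_{y_a})$.

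Summing the two brackets, the offset contributions $(c_{y_a}-c_{y_b})+(c_{y_b}-c_{y_a})$ cancel exactly, so $d$ is unchanged. Two points are worth a remark:
\begin{itemize}
\item Only the coordinates $c_{y_a}$ and $c_{y_b}$ ever enter, so $c$ may be completely arbitrary on the other answers.
\item The cancellation uses that $c$ is the \emph{same} vector for both members. This is the reason the pair is mapped to canonical order first (Section~\ref{sec:canon}): an offset tied to a letter rather than a meaning would not be shared in canonical coordinates when the members are rendered under different permutations.
\end{itemize}
As a corollary, a member-specific scalar shift $c=\alpha\mathbf{1}$ is also harmless, since it already cancels within each bracket.

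\textbf{Positivity.} Interpret ``correct under the binary restriction to $\{y_a,y_b\}$'' as saying that the restricted softmax over these two answers puts more mass on the reference. For $x_a$ this means $z_{y_a}(x_a)>z_{y_b}(x_a)$, and for $x_b$ it means $z_{y_b}(x_b)>z_{y_a}(x_b)$. The restricted softmax is monotone in the logit difference, so each condition is equivalent to the corresponding bracket being strictly positive. Hence $d$ is a sum of two positive numbers and $d>0$.

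\textbf{Main obstacle.} There is no technical difficulty here; the only care needed is in the interpretation of the hypotheses. First, ``correct'' has to mean a strict inequality; under ties the conclusion weakens to $d\ge 0$. Second, $y_a\neq y_b$ must both lie in the canonical set, so that neither logit is masked to $-\infty$ and $d$ is a finite real number. I would state both conventions explicitly before the computation.
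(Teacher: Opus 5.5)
Your proposal is correct and is essentially the paper's proof: the offset enters the first bracket as $c_{y_a}-c_{y_b}$ and the second as $c_{y_b}-c_{y_a}$, so the two cancel, and correctness of each member under the binary restriction is exactly positivity of the corresponding bracket. Your extra remarks are sound refinements rather than a different route: only $c_{y_a},c_{y_b}$ matter, ``correct'' must be a strict inequality, and neither logit may be masked.
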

\begin{IEEEproof}
Each bracket in \eqref{eq:did} is a difference of two logits of the same input,
so $c_{y_a}-c_{y_b}$ enters the first bracket and $c_{y_b}-c_{y_a}$ the second;
they cancel. Correctness of $x_a$ ($x_b$) under the restriction means the first
(second) bracket is positive.
\end{IEEEproof}

A per-pair offset $c$ is exactly what a model learns when it acquires a prior
over outcomes from the topic, the policy text or the family's surface form.
$\Lcf$ therefore cannot be reduced by such a prior, only by making the logit
gap move in the right direction when the one edited fact moves. Cross-entropy
lacks this property: a model that assigns 0.51 to $y_a$ on the base and 0.49 on
the counterfactual satisfies both cross-entropy terms nearly as well as a model
with a decisive, evidence-driven gap. Because $d$ couples both members, it can
only be evaluated when both are in the same micro-batch (Section~\ref{sec:optim}). We
use $m=2$ nats and a softplus hinge so that gradients vanish smoothly once the
pair is separated.

\subsection{Permutation consistency}
\label{sec:pc}
For two code permutations $\pi,\pi'$ of the same example we penalise
\begin{equation}
\Lpc=\JSD\big(p^{\pi}\,\|\,p^{\pi'}\big)
=\tfrac12\mathrm{KL}(p^{\pi}\|\bar p)+\tfrac12\mathrm{KL}(p^{\pi'}\|\bar p),
\end{equation}
with $\bar p=\tfrac12(p^{\pi}+p^{\pi'})$. JSD is bounded and symmetric, so no
view is privileged as a teacher and a confidently wrong view cannot produce an
unbounded gradient; the term is label-free. At inference the same invariance
supports a $K$-view ensemble that averages canonical log-probabilities. We
treat the gain from $K{=}1$ to $K{=}4$ as a measure of the bias that training
failed to remove: a perfectly permutation-invariant model gains nothing.

\subsection{Evidence-necessity regularisation}
\label{sec:nr}
The certificate states that in $x^\circ$ the focus fact is unknown given all
remaining text, and the decision rules that separate $y_a$ from $y_b$ take the
focus fact as input. Hence the evidence in $x^\circ$ does not discriminate
$y_a$ from $y_b$---although it may still rule out other answers. We encode
exactly that and nothing more:
\begin{equation}
\Lnr=\Big(\mathrm{relu}\big(|z_{y_a}(x^\circ)-z_{y_b}(x^\circ)|-\varepsilon\big)\Big)^2 .
\end{equation}
The term is silent about $z_j(x^\circ)$ for $j\notin\{y_a,y_b\}$, about the
entropy of $p(x^\circ)$, and about which of $y_a,y_b$ is ``correct''---claims
the certificate does not support. To build $x^\circ$ we locate the focus
sentence by certificate path, then exact leaf match, then substring, then the
counterfactual wording, and delete the whole conversational turn if it carries
nothing else. On the released training file this succeeds for all 2{,}676
records, verified by checking that the sentence no longer occurs anywhere in
the serialised context. We use $\varepsilon=0$.

\subsection{Ordinal transport for rubric fields}
\label{sec:emd}
Score fields expose levels $0,\dots,C-1$ and the serving code reports the
expected level $\sum_j j\,p_j$. Cross-entropy is invariant to the ordering of
the levels, so nothing in training prefers a near miss to a far one. We add
the squared earth-mover distance between predicted and reference CDFs,
\begin{equation}
\Lemd=\sum_{j=0}^{C-1}\Big(\sum_{i\le j}p_i-\mathbf{1}[\,j\ge y\,]\Big)^2 ,
\end{equation}
which is differentiable and minimised by mass concentrated \emph{near} $y$
rather than merely \emph{on} $y$.

\subsection{Contextual temperature}
\label{sec:temp}
The released model ships without calibration. A single global temperature has
the wrong shape here: a two-choice Boolean and a six-choice enumeration do
not share a miscalibration factor, and neither do a 300-token and a
1{,}100-token prompt. We fit
\begin{equation}
T(x)=\softplus\big(a+b\log C+c\log(L/1000)\big),
\end{equation}
with $L$ the prompt length in tokens, by minimising NLL with L-BFGS on a
\emph{calibration split} of training-side families never used to fit the
adapter. With $b=c=0$ this is standard temperature
scaling~\cite{guo2017calibration}. Because $T(x)>0$ rescales each item's logits
by a positive constant, it never changes an argmax, so accuracy and pair
accuracy are untouched; it can, however, change the \emph{ranking} of
confidences across items and hence selective risk.

\subsection{Objective, batching and cost}
\label{sec:optim}
The full objective is
\begin{equation}
\mathcal{L}=\Lce+\lambda_{\mathrm{cf}}\Lcf+\lambda_{\mathrm{pc}}\Lpc
+\lambda_{\mathrm{nr}}\Lnr+\lambda_{\mathrm{emd}}\Lemd ,
\label{eq:objective}
\end{equation}
with $(\lambda_{\mathrm{cf}},\lambda_{\mathrm{pc}},\lambda_{\mathrm{nr}},\lambda_{\mathrm{emd}})=(0.5,0.5,0.2,0.3)$,
chosen for scale comparability and \emph{not} tuned on any evaluation data. The
regularisers ramp in linearly over the first 10\% of steps. Since
$\Lcf$ needs both pair members and $\Lpc$ needs both views, the unit of
batching is a \emph{group}: one family with all of its views
(Algorithm~\ref{alg:step}). Groups are shuffled, sorted by length inside
buckets of 64 groups (prompt lengths vary by more than 3$\times$, so padding
dominates waste) and cut into micro-batches. Each pair contributes two primary, two permuted
and one ablated view, i.e.\ 2.5$\times$ the rows of the baseline.

\begin{algorithm}[t]
\caption{One PACT optimiser step}
\label{alg:step}
\small
\begin{algorithmic}[1]
\Require groups $\mathcal{G}$ (pair $(x_a,x_b)$, permuted $x_a^{\pi},x_b^{\pi}$, ablated $x^\circ$); ramp $r_t\in[0,1]$
\For{each micro-batch $B\subset\mathcal{G}$ (length-bucketed)}
  \State one forward pass over all views in $B$; read code logits
  \State scatter to canonical order with $\kappa$; mask unused slots
  \State $\ell\gets\Lce+r_t\lambda_{\mathrm{emd}}\Lemd$ \Comment{labelled views}
  \State $\ell\gets\ell+r_t\big(\lambda_{\mathrm{cf}}\Lcf+\lambda_{\mathrm{pc}}\Lpc+\lambda_{\mathrm{nr}}\Lnr\big)$
  \State accumulate $\nabla\ell$
\EndFor
\State clip global norm to 1; AdamW step on LoRA $A$ ($\eta$) and $B$ ($8\eta$)
\end{algorithmic}
\end{algorithm}

\section{Experimental Setup}
\label{sec:setup}

\subsection{Data and splits}
Training uses the released training file (2{,}676 records, 1{,}338 pairs,
34 source families, 10 domains). Four source families are held out from
training: two form the \emph{selection split} used for checkpoint selection
and two the \emph{calibration split} used to fit $T(x)$ (Table~\ref{tab:data}).
The frozen \emph{holdout} (324 records, 162 pairs, 6 source families from 6
domains) is used only to report final numbers---never for selection,
calibration or early stopping. Family-level disjointness and the file hashes
of the dataset manifest are re-verified at load time. Holdout fields have
$C\in\{2,\dots,6\}$ allowed answers, and the longest rendered prompt has
1{,}098 tokens.

\begin{table}[t]
\centering
\caption{Data splits. Splits are disjoint at the level of source families; every record belongs
to a two-record contrastive pair. The holdout is never used for selection or calibration.}
\label{tab:data}
\setlength{\tabcolsep}{2.6pt}
\footnotesize
\rowcolors{2}{white}{rowgray}
\begin{tabular}{l c r r ccc l}
\toprule
\rowcolor{hdr}
\hd{Split} & \hd{Fam.} & \hd{Records} & \hd{Pairs} & \hd{Choice} & \hd{Bool.} & \hd{Score} & \hd{Role} \\
\midrule
Training & 30 & 2{,}334 & 1{,}167 & 766 & 796 & 772 & train \\
Selection & 2 & 116 & 58 & 26 & 32 & 58 & select \\
Calibration & 2 & 226 & 113 & 64 & 60 & 102 & fit $T$ \\
\rowcolor{pactrow}Holdout (frozen) & 6 & 324 & 162 & 146 & 114 & 64 & test only \\
\bottomrule
\end{tabular}
\end{table}

\subsection{Model and training}
The base model is Qwen3.5-9B~\cite{qwen35} at the revision pinned by the
published recipe, in BF16 with a 2{,}048-token limit. The adapter is LoRA of
rank 16 ($\alpha{=}32$, dropout 0.05) on all attention and MLP projections of
the language model---the modules the published recipe adapts---with rsLoRA
scaling. We train with AdamW~\cite{loshchilov2019adamw} at $\eta=5\times10^{-5}$,
LoRA+ ratio 8, cosine schedule with 10\% warm-up, two epochs (584 steps) and an
effective batch of 8 primary rows (4 pairs) per step, matching the published
recipe. The selection split is scored every 146 steps and the checkpoint with
the lowest selection NLL is kept. Jobs ran one per GPU on two 48\,GB GPUs.

\subsection{Compared runs}
Table~\ref{tab:runs} lists the runs. \run{Nimble recipe} reproduces the
published recipe inside our harness. \run{CE-only} shares \emph{everything}
with PACT except the four terms and the extra views, so PACT vs.\ \run{CE-only}
isolates the objective, while \run{Nimble recipe} vs.\ \run{CE-only} isolates
the optimisation changes. The three main arms use seeds 17, 18 and 19; each
leave-one-out ablation uses seed 17 and is compared against PACT at the
\emph{same} seed (Section~\ref{sec:ablation}).

\begin{table}[t]
\centering
\caption{Compared runs. All share data, prompt, adapter family and
checkpoint selection. ``Opt.\,A'': 1 epoch, linear schedule, plain LoRA
(published). ``Opt.\,B'': 2 epochs, cosine, rsLoRA + LoRA+.}
\label{tab:runs}
\setlength{\tabcolsep}{3pt}
\footnotesize
\rowcolors{2}{white}{rowgray}
\begin{tabular}{l l l c c}
\toprule
\rowcolor{hdr}
\hd{Run} & \hd{Objective} & \hd{Views$^{\ddagger}$} & \hd{Opt.} & \hd{Seeds} \\
\midrule
Base model & -- (adapter off) & -- & -- & -- \\
Nimble recipe & $\Lce$ & 2/0/0 & A & 3 \\
CE-only & $\Lce$ & 2/0/0 & B & 3 \\
\rowcolor{pactrow}\textbf{PACT} & Eq.~\eqref{eq:objective} & 2/2/1 & B & 3 \\
$-\Lcf$, $-\Lnr$, $-\Lemd$ & PACT minus one term & 2/2/1 & B & 1 \\
$-\Lpc$ & PACT minus JSD & 2/2$^{\ast}$/1 & B & 1 \\
$-$permuted views & PACT without $\Lpc$ & 2/0/1 & B & 1 \\
\bottomrule
\end{tabular}\\[2pt]
{\scriptsize\raggedright $^{\ddagger}$Primary / permuted / ablated views per pair.
$^{\ast}$Permuted views kept as plain cross-entropy augmentation.\par}
\end{table}

\subsection{Metrics and statistics}
We report accuracy; \textbf{pair accuracy}, the share of the 162 holdout pairs
with \emph{both} members correct; NLL; multiclass Brier
score~\cite{brier1950verification}; ECE over 15 equal-width
bins~\cite{naeini2015ece}; the area under the confidence-ranked risk--coverage
curve (AURC)~\cite{geifman2017selective}; expected-level MAE on Score fields;
and, for $K{=}4$ decoding, the mean pairwise \textbf{total variation (TV)}
between the canonical distributions of different code orderings and the
\textbf{flip rate}, the share of items whose argmax changes across orderings.
Because all runs are scored on the same items, we compare accuracies with the
exact McNemar test~\cite{mcnemar1947} and a 10{,}000-resample paired
bootstrap~\cite{efron1993bootstrap}, per seed.

\subsection{Pre-specified falsification criteria}
\label{sec:hypotheses}
The following were written before the runs, so the ablations read as tests
rather than a search over cells.
\begin{itemize}
\item[\textbf{H1}] \emph{Pair margin.} Removing $\Lcf$ lowers pair accuracy
  by more than it lowers accuracy.
\item[\textbf{H2}] \emph{Permutation consistency.} PACT has lower TV and flip
  rate than \run{CE-only} and gains less from $K{=}4$ ensembling.
\item[\textbf{H3}] \emph{Necessity.} Without $\Lnr$ the model is more
  confident on ablated contexts and has no better pair accuracy.
\item[\textbf{H4}] \emph{Ordinal transport.} Without $\Lemd$, Score MAE is
  higher at similar Score accuracy.
\item[\textbf{H5}] \emph{Calibration.} Holdout ECE falls after applying
  $T(x)$ without a loss of accuracy.
\end{itemize}

\section{Results}
\label{sec:results}

\begin{table*}[t]
\centering
\caption{Holdout results, single-pass decoding ($K{=}1$), mean\,$\pm$\,s.d.\ over seeds 17/18/19
(324 items, 162 pairs). Accuracy, pair accuracy, ECE and AURC in \%. ECE$_T$ is after the contextual
temperature $T(x)$ (not fitted for the base model). \best{Shaded}: best mean per column.}
\label{tab:main}
\setlength{\tabcolsep}{5.2pt}
\rowcolors{2}{white}{rowgray}
\begin{tabular}{l cc ccccc c}
\toprule
\rowcolor{hdr}
\hd{Run} & \hd{Acc.\,$\uparrow$} & \hd{Pair acc.\,$\uparrow$} & \hd{NLL\,$\downarrow$} & \hd{Brier\,$\downarrow$} &
\hd{ECE\,$\downarrow$} & \hd{ECE$_T$\,$\downarrow$} & \hd{AURC\,$\downarrow$} & \hd{Score MAE\,$\downarrow$} \\
\midrule
Base model & 66.4 & 35.8 & 1.094 & 0.549 & 24.6 & -- & -- & 0.571 \\
Nimble recipe & \best{85.2\,{\scriptsize$\pm$\,2.1}} & 71.0\,{\scriptsize$\pm$\,3.9} & \best{0.449\,{\scriptsize$\pm$\,0.110}} & \best{0.236\,{\scriptsize$\pm$\,0.042}} & \best{9.5\,{\scriptsize$\pm$\,3.4}} & \best{5.4\,{\scriptsize$\pm$\,0.8}} & \best{3.41\,{\scriptsize$\pm$\,0.55}} & 0.311\,{\scriptsize$\pm$\,0.030} \\
CE-only & 82.2\,{\scriptsize$\pm$\,6.0} & 70.2\,{\scriptsize$\pm$\,6.2} & 0.760\,{\scriptsize$\pm$\,0.296} & 0.305\,{\scriptsize$\pm$\,0.110} & 14.3\,{\scriptsize$\pm$\,6.4} & 10.3\,{\scriptsize$\pm$\,4.7} & 4.46\,{\scriptsize$\pm$\,2.94} & 0.322\,{\scriptsize$\pm$\,0.153} \\
\rowcolor{pactrow}\textbf{PACT (ours)} & 84.6\,{\scriptsize$\pm$\,2.8} & \best{73.3\,{\scriptsize$\pm$\,6.2}} & 0.560\,{\scriptsize$\pm$\,0.088} & 0.251\,{\scriptsize$\pm$\,0.026} & 10.8\,{\scriptsize$\pm$\,1.7} & 7.9\,{\scriptsize$\pm$\,1.1} & 3.95\,{\scriptsize$\pm$\,1.03} & \best{0.232\,{\scriptsize$\pm$\,0.120}} \\
\bottomrule
\end{tabular}
\end{table*}

\begin{figure*}[t]
\centering
\includegraphics[width=\textwidth]{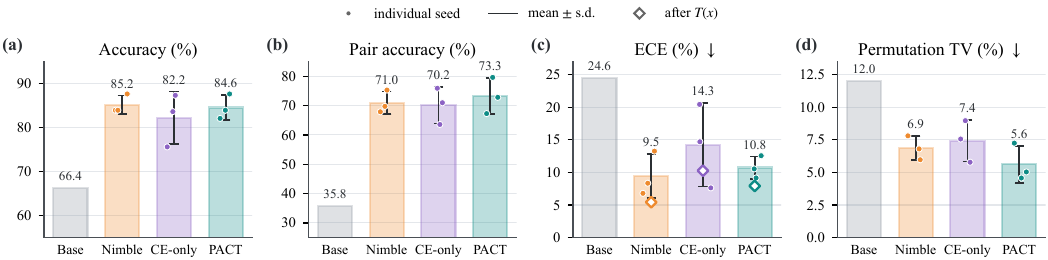}
\caption{Holdout comparison of the three multi-seed arms and the base model.
Bars: mean; dots: individual seeds; whiskers: $\pm$1 s.d. (a--c) single-pass
decoding; in (c) hollow diamonds give ECE after the contextual temperature.
(d) Permutation TV under $K{=}4$ relabellings (lower is less position bias).}
\label{fig:main}
\end{figure*}

\begin{figure*}[t]
\centering
\includegraphics[width=\textwidth]{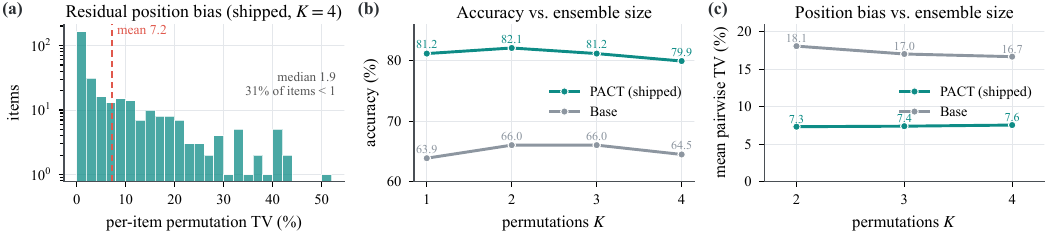}
\caption{Position bias. (a) Distribution of per-item permutation TV for the
shipped PACT model under $K{=}4$ (log scale). (b, c) Local 4-bit sweep over the
ensemble size $K$ for the shipped model and the base model: accuracy (b) and
mean pairwise TV between orderings (c).}
\label{fig:posbias}
\end{figure*}

\begin{figure*}[t]
\centering
\includegraphics[width=\textwidth]{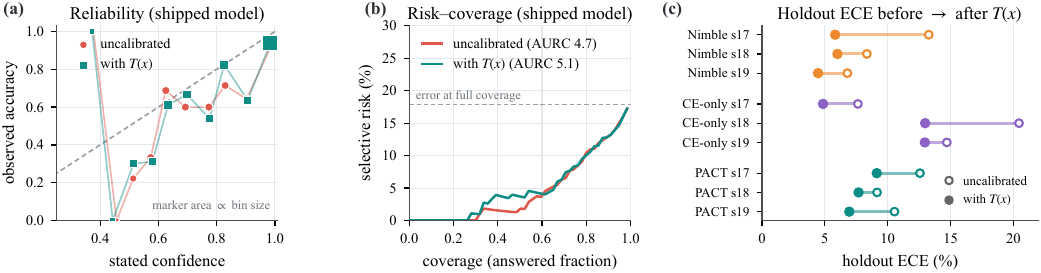}
\caption{Calibration. (a) Reliability diagram of the shipped model before and
after $T(x)$; marker area grows with bin size. (b) Risk--coverage curves,
items ranked by confidence. (c) Holdout ECE before (hollow) and after (filled)
$T(x)$ for every run and seed.}
\label{fig:calibration}
\end{figure*}

\subsection{Main comparison}
\label{sec:main}
Table~\ref{tab:main} and Fig.~\ref{fig:main} give the headline numbers, and
Table~\ref{tab:significance} the paired tests. Three findings stand out.

\emph{PACT matches, but does not beat, the published recipe in accuracy.}
PACT reaches 84.6$\pm$2.8\% against 85.2$\pm$2.1\% for \run{Nimble recipe}.
At every seed the difference is inside the bootstrap interval and McNemar
$p\ge0.50$; at two seeds the accuracies are identical. Properly reproduced with
checkpoint selection, the published recipe is a strong accuracy baseline, and
it is also the best-calibrated run (ECE$_T$ 5.4\% vs.\ 7.9\% for PACT).

\emph{The optimisation changes alone hurt; the PACT terms repair them.}
\run{CE-only}---PACT's optimiser and schedule without the four terms---is the
weakest and least stable fine-tuned arm (82.2$\pm$6.0\%, NLL 0.760, ECE
14.3\%). Adding the terms raises mean accuracy by 2.4\,pp, significantly so at
seeds 18 and 19 ($p=0.008$ and $0.035$; seed 17 favours \run{CE-only},
$p=0.099$), halves the seed-to-seed standard deviation (6.0$\to$2.8\,pp) and
lowers NLL by 26\% and ECE by 25\%. The terms thus act as regularisers that
make a more aggressive optimiser safe, rather than as a source of accuracy
beyond a well-tuned baseline.

\emph{Where PACT does lead, the lead matches the structure it encodes.}
PACT has the highest mean pair accuracy (73.3\%, not significant given the
6\,pp seed spread), the lowest Score MAE of all multi-seed arms
(0.232 vs.\ 0.311 and 0.322), and the lowest position bias
(Section~\ref{sec:posbias}). PACT clears the base model by 18--24\,pp
($p<0.001$ at every seed).

\begin{table}[t]
\centering
\caption{Paired tests on holdout accuracy (\%), same 324 items. $b/c$: items only PACT / only
the control gets right; exact two-sided McNemar $p$; 95\% CI of the accuracy difference from a
10{,}000-resample paired bootstrap (pp). \sigc{Highlighted}: $p<0.05$.
$^\dagger$Base-model predictions are from a local 4-bit re-evaluation (acc.\ 63.9\%).}
\label{tab:significance}
\setlength{\tabcolsep}{3.1pt}
\footnotesize
\rowcolors{2}{white}{rowgray}
\begin{tabular}{c ccc c c c}
\toprule
\rowcolor{hdr}
\hd{Seed} & \hd{PACT} & \hd{Ctrl.} & \hd{$\Delta$} & \hd{$b/c$} & \hd{$p$} & \hd{95\% CI} \\
\midrule
\multicolumn{7}{l}{\cellcolor{subhdr}\textit{PACT vs.\ Nimble recipe}} \\
17 & 84.0 & 84.0 & 0.0 & 19/19 & 1.000 & [$-$3.7, +3.7] \\
18 & 82.1 & 84.0 & $-$1.9 & 25/31 & 0.504 & [$-$6.5, +2.8] \\
19 & 87.7 & 87.7 & 0.0 & 27/27 & 1.000 & [$-$4.6, +4.3] \\
\midrule
\multicolumn{7}{l}{\cellcolor{subhdr}\textit{PACT vs.\ CE-only}} \\
17 & 84.0 & 87.3 & $-$3.4 & 13/24 & 0.099 & [$-$7.1, +0.3] \\
18 & 82.1 & 75.6 & +6.5 & 39/18 & \sigc{0.008} & [+1.9, +11.1] \\
19 & 87.7 & 83.6 & +4.0 & 23/10 & \sigc{0.035} & [+0.6, +7.4] \\
\midrule
\multicolumn{7}{l}{\cellcolor{subhdr}\textit{PACT vs.\ Base model$^\dagger$}} \\
17 & 84.0 & 63.9 & +20.1 & 92/27 & \sigc{$<$0.001} & [+13.9, +26.2] \\
18 & 82.1 & 63.9 & +18.2 & 86/27 & \sigc{$<$0.001} & [+12.0, +24.4] \\
19 & 87.7 & 63.9 & +23.8 & 98/21 & \sigc{$<$0.001} & [+17.9, +29.9] \\
\bottomrule
\end{tabular}
\end{table}

\begin{table}[t]
\centering
\caption{Permutation-ensembled decoding ($K{=}4$), \%. TV: mean pairwise total variation
between the $K$ canonical distributions; Flip: share of items whose argmax changes across
orderings. $\Delta_K$: accuracy gain of $K{=}4$ over $K{=}1$ (pp); a position-invariant
model should gain nothing.}
\label{tab:ensemble}
\setlength{\tabcolsep}{2.6pt}
\footnotesize
\rowcolors{2}{white}{rowgray}
\begin{tabular}{l ccccc c}
\toprule
\rowcolor{hdr}
\hd{Run} & \hd{Acc.} & \hd{Pair} & \hd{ECE} & \hd{TV\,$\downarrow$} & \hd{Flip\,$\downarrow$} & \hd{$\Delta_K$} \\
\midrule
Base model & 67.6 & 38.3 & 21.4 & 12.0 & 23.5 & \gainc{+1.2} \\
Nimble recipe & \best{85.7\,{\scriptsize$\pm$\,2.6}} & 72.4\,{\scriptsize$\pm$\,4.4} & \best{4.7\,{\scriptsize$\pm$\,1.0}} & 6.9\,{\scriptsize$\pm$\,0.9} & 13.8\,{\scriptsize$\pm$\,1.8} & \gainc{+0.5} \\
CE-only & 84.6\,{\scriptsize$\pm$\,4.6} & \best{73.3\,{\scriptsize$\pm$\,6.8}} & 8.2\,{\scriptsize$\pm$\,3.4} & 7.4\,{\scriptsize$\pm$\,1.6} & 12.1\,{\scriptsize$\pm$\,3.8} & \gainc{+2.4} \\
\rowcolor{pactrow}\textbf{PACT (ours)} & 83.8\,{\scriptsize$\pm$\,2.6} & 72.4\,{\scriptsize$\pm$\,5.5} & 8.6\,{\scriptsize$\pm$\,1.6} & \best{5.6\,{\scriptsize$\pm$\,1.4}} & \best{9.8\,{\scriptsize$\pm$\,2.1}} & \gainc{$-$0.7} \\
\bottomrule
\end{tabular}
\end{table}

\begin{table*}[t]
\centering
\caption{Seed-matched leave-one-out ablations (seed 17 for every row; single-pass except the
last two columns, which use $K{=}4$). Cells are coloured by the change relative to the full
method in the first row (seed 17):
\swatch{hurtB}\,removing the term clearly hurts, \swatch{hurtA}\,slightly hurts,
\swatch{helpA}\,slightly helps, \swatch{helpB}\,clearly helps; uncoloured: change below a
quarter of the ``clear'' threshold (2\,pp for accuracy-type metrics). One seed per row: read colours as
directions, not as significance.}
\label{tab:ablation}
\setlength{\tabcolsep}{5.6pt}
\begin{tabular}{l cccccc cc}
\toprule
\rowcolor{hdr}
\hd{Variant (seed 17)} & \hd{Acc.\,$\uparrow$} & \hd{Pair acc.\,$\uparrow$} & \hd{NLL\,$\downarrow$} &
\hd{ECE\,$\downarrow$} & \hd{AURC\,$\downarrow$} & \hd{Score MAE\,$\downarrow$} &
\hd{TV$_{K=4}$\,$\downarrow$} & \hd{Flip$_{K=4}$\,$\downarrow$} \\
\midrule
\rowcolor{pactrow}\textbf{PACT (ours)} & 84.0 & 72.8 & 0.659 & 12.6 & 4.44 & 0.229 & 5.0 & 8.0 \\
\midrule
$-\mathcal{L}_{\mathrm{CF}}$ & 84.3 & \cellcolor{hurtA}71.6 & \cellcolor{helpB}0.491 & \cellcolor{helpA}11.1 & \cellcolor{helpB}2.74 & \cellcolor{hurtB}0.284 & 5.3 & \cellcolor{hurtA}9.6 \\
$-\mathcal{L}_{\mathrm{PC}}$ & \cellcolor{helpB}87.7 & \cellcolor{helpB}79.6 & \cellcolor{helpB}0.513 & \cellcolor{helpB}7.5 & \cellcolor{helpB}2.82 & \cellcolor{helpB}0.138 & 4.8 & \cellcolor{hurtA}9.3 \\
$-\mathcal{L}_{\mathrm{NR}}$ & \cellcolor{hurtB}80.9 & \cellcolor{hurtB}64.8 & \cellcolor{helpB}0.545 & \cellcolor{helpB}10.3 & \cellcolor{hurtB}6.63 & \cellcolor{hurtB}0.344 & \cellcolor{hurtB}11.7 & \cellcolor{hurtB}17.6 \\
$-\mathcal{L}_{\mathrm{EMD}}$ & 84.3 & \cellcolor{hurtB}70.4 & \cellcolor{helpB}0.388 & \cellcolor{helpB}5.8 & \cellcolor{helpB}3.43 & \cellcolor{hurtB}0.282 & \cellcolor{hurtB}8.9 & \cellcolor{hurtB}15.7 \\
$-$permuted views & \cellcolor{helpA}84.9 & \cellcolor{helpA}74.1 & \cellcolor{helpB}0.510 & \cellcolor{helpB}8.8 & \cellcolor{helpA}3.51 & \cellcolor{hurtB}0.356 & \cellcolor{hurtB}6.6 & \cellcolor{hurtB}13.0 \\
\midrule
CE-only (all four off) & \cellcolor{helpB}87.3 & \cellcolor{helpB}75.9 & \cellcolor{helpB}0.454 & \cellcolor{helpB}7.6 & \cellcolor{helpB}2.66 & \cellcolor{helpA}0.182 & \cellcolor{hurtB}9.0 & \cellcolor{hurtB}16.4 \\
\bottomrule
\end{tabular}
\end{table*}

\begin{table*}[t]
\centering
\caption{Pre-specified hypotheses (Section~\ref{sec:hypotheses}) against the evidence.
``Seed-matched'' values compare against PACT at seed 17; multi-seed values are means over three seeds.}
\label{tab:scorecard}
\footnotesize
\setlength{\tabcolsep}{4pt}
\rowcolors{2}{white}{rowgray}
\begin{tabular}{c >{\raggedright\arraybackslash}p{3.6cm} >{\raggedright\arraybackslash}p{9.6cm} c}
\toprule
\rowcolor{hdr}
\hd{H} & \hd{Prediction} & \hd{Evidence} & \hd{Verdict} \\
\midrule
H1 & $-\Lcf$ loses more pair accuracy than accuracy &
Seed-matched, $K{=}1$: accuracy $+0.3$\,pp, pair accuracy $-1.2$\,pp (direction as predicted);
under $K{=}4$ the direction reverses ($+1.5$ / $+2.5$\,pp). Mechanism confirmed: $d$ rises from $-0.3$ to $>10$ nats (Fig.~\ref{fig:training}). &
\verdict{vMixed}{INCONCLUSIVE} \\
H2 & Lower TV/flip than CE-only; smaller $K{=}4$ gain &
TV 5.6 vs.\ 7.4\%, flip 9.8 vs.\ 12.1\%, $\Delta_K$ $-0.7$ vs.\ $+2.4$\,pp; also below the Nimble recipe.
Seed-matched: dropping the JSD term raises flips 8.0$\to$9.3\%, dropping all permuted views 8.0$\to$13.0\%. &
\verdict{vGood}{SUPPORTED} \\
H3 & $-\Lnr$ more confident on ablated contexts; no better pair accuracy &
Confidence on ablated contexts was not logged for $-\Lnr$ (untested half). Pair accuracy $-8.0$\,pp,
accuracy $-3.1$\,pp, TV 5.0$\to$11.7\%; with $\Lnr$ the ablated-context gap shrinks from 3.4 to $<$0.1 nats. &
\verdict{vPart}{PARTLY TESTED} \\
H4 & $-\Lemd$ raises Score MAE at similar Score accuracy &
Score MAE 0.229$\to$0.282, but Score accuracy is \emph{not} similar (79.7$\to$84.4\%). Multi-seed: lowest Score MAE
of all arms; Score accuracy above the Nimble recipe at all three seeds. &
\verdict{vPart}{PARTLY SUPPORTED} \\
H5 & ECE falls under $T(x)$, accuracy kept &
Holdout ECE lower in 9/9 runs (PACT 10.8$\to$7.9\%); accuracy unchanged by construction; AURC slightly worse (3.95$\to$4.21). &
\verdict{vGood}{SUPPORTED} \\
\bottomrule
\end{tabular}
\end{table*}

\subsection{Position bias and permutation ensembling}
\label{sec:posbias}
Table~\ref{tab:ensemble} reports $K{=}4$ decoding. PACT has the lowest TV
(5.6\% vs.\ 6.9\% for \run{Nimble recipe}, 7.4\% for \run{CE-only} and 12.0\%
for the base model) and the lowest flip rate (9.8\% vs.\ 13.8\%, 12.1\% and
23.5\%). It is also the only arm that gains nothing from ensembling
($\Delta_K=-0.7$\,pp vs.\ $+2.4$\,pp for \run{CE-only}): its single pass
already captures what the ensemble would recover, which is precisely the
behaviour H2 predicts and the property that matters for single-pass serving.

Fig.~\ref{fig:posbias} looks closer. The bias of the shipped model is
concentrated: the median item has TV of 1.9\% and 31\% of items are below 1\%,
while a small tail of items carries TV above 30\%. A local 4-bit sweep over
$K\in\{1,\dots,4\}$ (single seed; supplementary) shows the residual TV of PACT
plateauing at 7.3--7.6\% against 16.7--18.1\% for the base model,
2.2--2.4$\times$ lower at every $K$, and accuracy peaking at $K{=}2$ for both
models. The non-monotone accuracy curve is not a quantisation artefact: the
BF16 multi-seed results show the same pattern (84.6\% at $K{=}1$, 83.8\% at
$K{=}4$).

\subsection{Calibration}
\label{sec:calibration}
The contextual temperature lowers holdout ECE in all nine multi-seed runs
(Fig.~\ref{fig:calibration}c; per-run values in Table~\ref{tab:calibration},
Appendix~\ref{app:tables}), on average from 10.8\% to 7.9\% for PACT, from
9.5\% to 5.4\% for \run{Nimble recipe} and from 14.3\% to 10.3\% for
\run{CE-only}; accuracy is unchanged by construction. For the shipped model the
fitted slopes are negative for both the number of choices ($b=-0.42$) and the
prompt length ($c=-0.48$), i.e.\ fields with many choices and long prompts are
\emph{under}-confident relative to short binary ones, which a single scalar
temperature cannot express (Fig.~\ref{fig:temperature}a).

Three caveats are reported rather than omitted. First, the \emph{shape} of the
fitted temperature is not stable across runs (Fig.~\ref{fig:temperature}b):
over the nine multi-seed runs the slope on $\log C$ ranges from $-1.60$ to
$+2.73$ and the slope on $\log(L/1000)$ from $-3.49$ to $+3.58$. With 226
calibration records the two slopes are weakly identified; what transfers to the
holdout is most plausibly the overall \emph{level} of the temperature, so a scalar
temperature is a reasonable default and the contextual form should be
regularised or fitted on a larger split. Second, on the calibration split of
the shipped seed the fit leaves ECE essentially unchanged (5.5\%$\to$5.7\%);
the gain appears only on the holdout. Third, because $T(x)$ re-ranks
confidences across items, it slightly worsens selective risk (AURC
3.95$\to$4.21 for PACT; Fig.~\ref{fig:calibration}b).

\begin{figure}[t]
\centering
\includegraphics[width=\columnwidth]{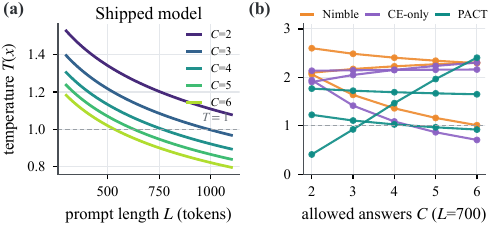}
\caption{Fitted contextual temperature. (a) Shipped model: $T(x)$ against prompt
length for each number of allowed answers $C$ in the holdout. (b) $T$ against $C$
at $L=700$ for every multi-seed run: for most runs $T>1$ (raw logits are
over-confident), but the slopes differ in sign between seeds.}
\label{fig:temperature}
\end{figure}

\subsection{Seed-matched ablations}
\label{sec:ablation}
Table~\ref{tab:ablation} compares each leave-one-out variant with PACT
\emph{at the same seed}. Two caveats frame the reading. Each row is a single
run, and the seed-17 PACT reference is itself a relatively weak draw
(84.0\% accuracy vs.\ 84.6\% mean, and the highest ECE of its three seeds), so
colours indicate directions, not significance. With that in mind, the pattern
is consistent:
\begin{itemize}
\item \textbf{No single term raises raw accuracy.} Removing $\Lcf$ or $\Lemd$
  leaves accuracy unchanged (84.3\% vs.\ 84.0\%); removing $\Lpc$ even raises
  it (87.7\%), as does removing all four terms (\run{CE-only}, 87.3\% at this
  seed, though not at the other two). Every removal also improves uncalibrated
  NLL and ECE, i.e.\ at this seed the regularisers trade
  calibration of the raw logits for other properties---a cost the post-hoc
  temperature largely absorbs (Section~\ref{sec:calibration}).
\item \textbf{The robustness properties are carried by the terms.} Removing
  $\Lnr$, $\Lemd$, the permuted views or all four terms raises the flip rate
  (8.0\% $\to$ 13.0--17.6\%) and TV; removing $\Lcf$, $\Lnr$, $\Lemd$ or the
  permuted views raises Score MAE (0.229 $\to$ 0.282--0.356).
\item \textbf{The necessity term is the most load-bearing.} Removing $\Lnr$
  degrades accuracy ($-$3.1\,pp), pair accuracy ($-$8.0\,pp), AURC
  (4.44$\to$6.63), Score MAE and position bias (TV 5.0$\to$11.7\%) at once.
\end{itemize}

\subsection{Where the differences are}
\label{sec:where}
Pooled accuracy hides a structured, seed-consistent pattern
(Fig.~\ref{fig:domains}). Relative to \run{Nimble recipe}, PACT is better on
Score fields at all three seeds (+6.2 to +15.6\,pp) and worse on Choice
($-$3.4 to $-$4.8\,pp) and Boolean fields ($-$0.9 to $-$4.4\,pp) at all three
seeds; per-kind accuracies are in Table~\ref{tab:kind}
(Appendix~\ref{app:tables}). The Score advantage is what the ordinal term
targets. By domain, PACT is better on \emph{public services} and worse on
\emph{education} and \emph{media} at every seed. We had no pre-specified
hypothesis about domains and report this as an observation for future work:
the gains of a structure-derived objective are heterogeneous, and a pooled
accuracy delta near zero can conceal offsetting per-kind effects of 5--15\,pp.

The same split appears in calibration (Fig.~\ref{fig:errors}c). Averaged over
seeds, PACT halves the ECE of Score fields relative to \run{Nimble recipe}
(10.8\% vs.\ 21.2\%; 10.1\% vs.\ 16.6\% after $T(x)$) but is worse on Choice
fields (16.0\% vs.\ 10.3\%), which is where its overall calibration deficit
comes from. (Per-kind ECE is computed on 64--146 items and is biased upwards
relative to the pooled value.)

\textbf{How pairs fail.} Fig.~\ref{fig:errors}b breaks the shipped model's 162
holdout pairs down by outcome. Both members are right in 67\% of pairs. When a
pair is only half right, it is usually the counterfactual that is missed
(35 pairs vs.\ 13 in which only the base is missed; for Boolean fields all 11
half-right pairs miss the counterfactual). Most strikingly, in 39 pairs (24\%;
33\% for Choice fields) the model gives the \emph{same} answer to both members,
i.e.\ it ignores the edited fact altogether---74\% of all imperfect pairs. This
is exactly the failure $\Lcf$ targets, and it remains the dominant error mode
after training: the mean margin on training pairs ends far above $m$
(Fig.~\ref{fig:training}b), but the behaviour does not fully transfer to new
source families.
Errors are also not always hesitant (Fig.~\ref{fig:errors}a): the median wrong
answer has confidence 0.74 against 0.995 for right answers, yet 16 of the 58
errors (28\%) are made with confidence $\ge0.9$, against 56 of 117 (48\%) for
the base model.

\begin{figure*}[t]
\centering
\includegraphics[width=\textwidth]{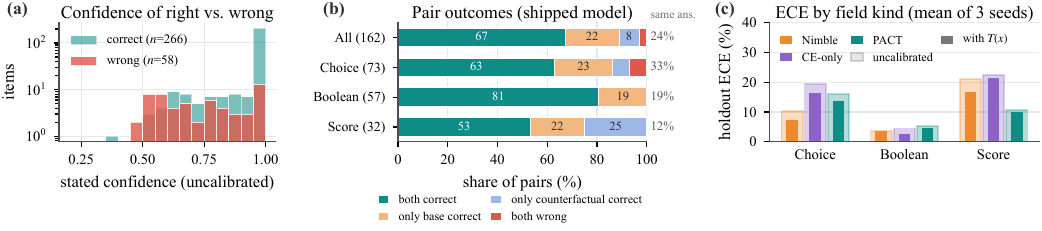}
\caption{Error analysis on the holdout. (a) Uncalibrated confidence of correct and
wrong answers of the shipped model (log scale). (b) Outcomes of the 162 contrastive
pairs by field kind (numbers: \% of pairs; right: share of pairs where both members
receive the same answer, which guarantees at least one error). (c) ECE by field kind,
mean over three seeds: wide light bars uncalibrated, narrow dark bars after $T(x)$.}
\label{fig:errors}
\end{figure*}

\begin{figure}[t]
\centering
\includegraphics[width=\columnwidth]{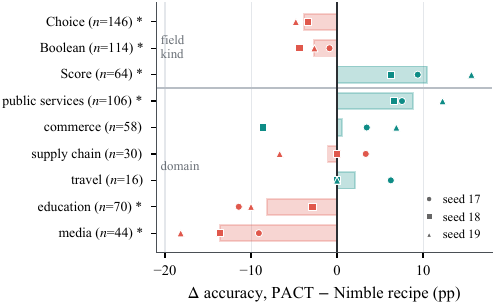}
\caption{Per-kind and per-domain accuracy difference, PACT minus
\run{Nimble recipe}, on the holdout. Markers: seeds; bars: mean. $^{*}$Same sign
at all three seeds.}
\label{fig:domains}
\end{figure}

\subsection{Training dynamics}
\label{sec:dynamics}
Fig.~\ref{fig:training} shows that the terms are optimised as intended. The
pair margin $d$ starts negative ($-0.3$ nats: the base model moves the log-odds
the wrong way), first exceeds the margin $m=2$ at step 20 and, after early
oscillation, ends near 10--13 nats. The logit gap on ablated contexts falls from 3.4 to below 0.1
nats, i.e.\ the model learns to be indifferent between $y_a$ and $y_b$ exactly
where the certificate says it should. $\Lnr$ is also the noisiest term, with
isolated spikes when a batch contains an ablated context the model still
separates strongly.

\begin{figure}[t]
\centering
\includegraphics[width=\columnwidth]{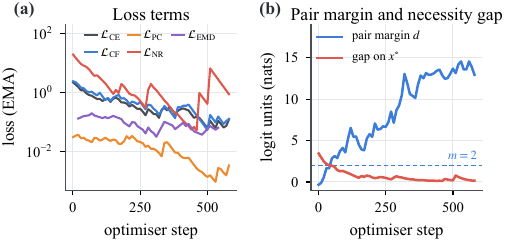}
\caption{Training dynamics of the shipped run (EMA-smoothed). (a) Loss terms.
(b) Mean pair margin $d$ of Eq.~\eqref{eq:did} (dashed: $m=2$) and mean gap
$|z_{y_a}-z_{y_b}|$ on ablated contexts $x^{\circ}$.}
\label{fig:training}
\end{figure}

\subsection{Hypothesis scorecard}
\label{sec:scorecard}
Table~\ref{tab:scorecard} confronts each pre-specified criterion with the
evidence. Two hypotheses are supported, two partly, and one is inconclusive;
none is contradicted outright, but H1---the motivating idea of the
paper---does not receive clean support from a single-seed ablation.

\section{Discussion}
\label{sec:discussion}

\textbf{What the evidence supports.} The results do not support the
proposition that structure-derived terms raise accuracy over a well-reproduced
cross-entropy baseline. They do support three narrower claims: (i) the terms
reduce answer-code position bias and ordinal error, the two failure modes they
were designed around; (ii) they stabilise an optimiser that is otherwise
significantly worse and highly seed-sensitive; and (iii) post-hoc
calibration of this model family is cheap and lowers ECE in every run, although
only the overall temperature level, not its contextual shape, is reliably
estimated. For a practitioner
who serves single-pass decisions over rubric fields, or who cannot afford $K$
forward passes to average out position bias, these are the properties that
matter; for a practitioner who only needs pooled accuracy, the published recipe
is sufficient and about five times cheaper to train.

\textbf{What remains open.} The dominant holdout failure is still the one the
pair margin was designed for: in a quarter of the pairs the model answers both
members identically, ignoring the edited fact (Section~\ref{sec:where}). The mean
margin is met on training pairs but the behaviour does not fully transfer to
unseen source families, which points to margin schedules, harder negative edits or
pair-level data augmentation as the most promising next steps.

\textbf{Cost.} Each pair contributes 2.5$\times$ the rows of the baseline
(Table~\ref{tab:cost}). Measured wall-clock per run is 1.37\,h for PACT against
0.50\,h for \run{CE-only} and 0.26\,h for the one-epoch \run{Nimble recipe}, and
peak GPU memory is 28.8\,GiB against 21.8\,GiB. Cost scales with the number of
views rather than with the loss terms: dropping $\Lpc$ alone saves nothing,
while dropping the permuted views saves 45\% of the time. Serving cost is
unchanged.

\begin{table}[t]
\centering
\caption{Training cost per run (one 48\,GB GPU per job): optimiser steps, training views per pair,
mean wall-clock time, peak GPU memory, and mean holdout accuracy (\%). Serving cost is identical
for all runs.}
\label{tab:cost}
\setlength{\tabcolsep}{3.2pt}
\footnotesize
\rowcolors{2}{white}{rowgray}
\begin{tabular}{l c c c c c}
\toprule
\rowcolor{hdr}
\hd{Run} & \hd{Steps} & \hd{Views/pair} & \hd{Hours} & \hd{Peak GiB} & \hd{Acc.} \\
\midrule
Nimble recipe & 292 & 2.0 & 0.26 & 21.8 & 85.2 \\
CE-only & 584 & 2.0 & 0.50 & 21.8 & 82.2 \\
\rowcolor{pactrow}\textbf{PACT} & 584 & 5.0 & 1.37 & 28.8 & 84.6 \\
\midrule
$-\mathcal{L}_{\mathrm{CF}}$ & 584 & 5.0 & 1.37 & 28.7 & 84.3 \\
$-\mathcal{L}_{\mathrm{PC}}$ & 584 & 5.0 & 1.36 & 28.7 & 87.7 \\
$-\mathcal{L}_{\mathrm{NR}}$ & 584 & 4.0 & 1.06 & 26.4 & 80.9 \\
$-\mathcal{L}_{\mathrm{EMD}}$ & 584 & 5.0 & 1.36 & 28.7 & 84.3 \\
$-$permuted views & 584 & 3.0 & 0.76 & 24.0 & 84.9 \\
\bottomrule
\end{tabular}
\end{table}

\textbf{Selection noise.} Checkpoint selection is itself a source of variance
comparable to the effects studied here (Fig.~\ref{fig:selection}). The shipped
model is the PACT seed with the lowest selection-split NLL (seed 18), yet it has
the lowest holdout accuracy of the three PACT seeds (82.1\%). More strikingly,
the checkpoint with the lowest selection NLL of \emph{all} nine runs
(\run{CE-only} seed 18: NLL 0.112, 97.4\% selection accuracy) has the
\emph{lowest} holdout accuracy of all fine-tuned runs (75.6\%). With only two
source families and 116 records, the selection split rewards fitting those
families rather than transfer, which is one more reason to report seeds
individually and to test differences in pairs.

\begin{figure}[t]
\centering
\includegraphics[width=\columnwidth]{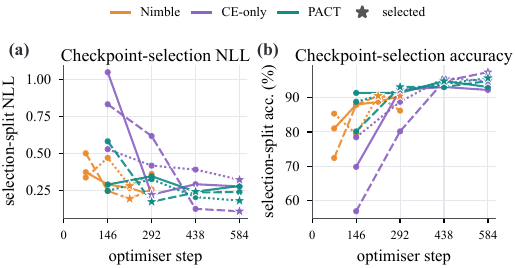}
\caption{Selection-split trajectories of the three multi-seed arms (line style:
seed 17 solid, 18 dashed, 19 dotted; stars: selected checkpoints). \run{CE-only}
converges slowly and reaches the best selection scores, which do not carry over
to the holdout (Table~\ref{tab:significance}).}
\label{fig:selection}
\end{figure}

\section{Limitations}
\label{sec:limits}
The holdout contains 324 synthetic, model-labelled items from six source
families. It is narrow, and its labels come from the same family of models as
the training labels, so shared errors are possible and undetectable from inside
the benchmark. $\Lnr$ and $\Lcf$ trust the certificate: a wrong necessity check
teaches indifference where the evidence actually decides. We apply the
necessity term to one ablated view per pair to bound cost. The ablations use one
seed each, so per-term claims are directional; the significance tests cover only
the three multi-seed arms. The loss weights were fixed a priori and not tuned,
so the negative accuracy findings are for this setting, not for every weighting.
All claims concern one base model at one scale with one adapter rank.

\section{Conclusion}
\label{sec:conclusion}
Contrastive curation with certificates contains supervision that cross-entropy
cannot express: the offset-free effect of an edited fact, the arbitrariness of
answer codes, the indifference certified by evidence removal, and the order of
rubric levels. PACT turns each into a training term without changing the
serving contract. Evaluated under pre-specified falsification criteria, the
terms do not make the model more accurate than a well-reproduced baseline, but
they make it less sensitive to answer-code order, better on ordinal fields and
more stable across seeds, and a three-parameter temperature makes its
probabilities usable. We hope the protocol---a reproduced baseline, a
matched-optimiser control, seed-matched ablations and paired tests---is as
useful as the method.

\bibliographystyle{IEEEtran}
\bibliography{refs}

\appendices

\section{Additional Tables}
\label{app:tables}
Table~\ref{tab:kind} breaks accuracy down by field kind for every run and
seed; Table~\ref{tab:calibration} gives the fitted temperatures and ECE before
and after calibration on both the calibration split and the holdout.

\begin{table}[t]
\centering
\caption{Holdout accuracy (\%) by field kind, single-pass decoding (Choice $n{=}146$,
Boolean $n{=}114$, Score $n{=}64$), and expected-level MAE on Score fields.}
\label{tab:kind}
\setlength{\tabcolsep}{4pt}
\footnotesize
\rowcolors{2}{white}{rowgray}
\begin{tabular}{l c ccc c}
\toprule
\rowcolor{hdr}
\hd{Run} & \hd{Seed} & \hd{Choice} & \hd{Boolean} & \hd{Score} & \hd{Score MAE} \\
\midrule
Base model & -- & 59.6 & 84.2 & 50.0 & 0.571 \\
\midrule
Nimble recipe & 17 & 80.8 & 95.6 & 70.3 & 0.326 \\
Nimble recipe & 18 & 81.5 & 94.7 & 70.3 & 0.331 \\
Nimble recipe & 19 & 84.9 & 97.4 & 76.6 & 0.277 \\
\midrule
CE-only & 17 & 78.8 & 97.4 & 89.1 & 0.182 \\
CE-only & 18 & 68.5 & 94.7 & 57.8 & 0.486 \\
CE-only & 19 & 78.1 & 94.7 & 76.6 & 0.299 \\
\midrule
\rowcolor{pactrow}\textbf{PACT} & 17 & 77.4 & 94.7 & 79.7 & 0.229 \\
\rowcolor{pactrow}\textbf{PACT} & 18 & 78.1 & 90.4 & 76.6 & 0.353 \\
\rowcolor{pactrow}\textbf{PACT} & 19 & 80.1 & 94.7 & 92.2 & 0.114 \\
\midrule
$-\mathcal{L}_{\mathrm{CF}}$ & 17 & 79.5 & 95.6 & 75.0 & 0.284 \\
$-\mathcal{L}_{\mathrm{PC}}$ & 17 & 80.1 & 94.7 & 92.2 & 0.138 \\
$-\mathcal{L}_{\mathrm{NR}}$ & 17 & 76.7 & 93.9 & 67.2 & 0.344 \\
$-\mathcal{L}_{\mathrm{EMD}}$ & 17 & 75.3 & 95.6 & 84.4 & 0.282 \\
$-$permuted views & 17 & 83.6 & 94.7 & 70.3 & 0.356 \\
\bottomrule
\end{tabular}
\end{table}

\begin{table}[t]
\centering
\caption{Contextual temperature per run: mean [min, max] of $T(x)$ on the calibration split,
and ECE (\%) before/after on the calibration split (fit) and on the holdout (test).
$^\star$Shipped model. \worse{Red}: $T(x)$ raised ECE.}
\label{tab:calibration}
\setlength{\tabcolsep}{3pt}
\footnotesize
\rowcolors{2}{white}{rowgray}
\begin{tabular}{l c c cc cc}
\toprule
\rowcolor{hdr}
 & & & \multicolumn{2}{c}{\hd{Calib.\ split}} & \multicolumn{2}{c}{\hd{Holdout}} \\
\rowcolor{hdr}
\hd{Run} & \hd{Seed} & \hd{$T(x)$} & \hd{pre} & \hd{post} & \hd{pre} & \hd{post} \\
\midrule
Nimble recipe & 17 & 1.85 [1.10, 2.89] & 14.4 & 6.5 & 13.3 & 5.8 \\
Nimble recipe & 18 & 1.54 [0.99, 2.00] & 11.4 & 8.3 & 8.3 & 6.0 \\
Nimble recipe & 19 & 1.85 [1.28, 2.41] & 16.1 & 9.1 & 6.8 & 4.5 \\
\midrule
CE-only & 17 & 2.01 [1.78, 2.23] & 15.5 & 6.0 & 7.6 & 4.9 \\
CE-only & 18 & 2.12 [1.95, 2.32] & 15.7 & 4.5 & 20.4 & 13.0 \\
CE-only & 19 & 1.56 [0.70, 2.45] & 8.1 & 7.6 & 14.7 & 13.0 \\
\midrule
\rowcolor{pactrow}\textbf{PACT} & 17 & 1.56 [0.69, 2.69] & 12.8 & 5.7 & 12.6 & 9.1 \\
\rowcolor{pactrow}\textbf{PACT} & 18$^\star$ & 1.16 [0.92, 1.40] & 5.5 & \worse{5.7} & 9.2 & 7.7 \\
\rowcolor{pactrow}\textbf{PACT} & 19 & 1.66 [1.61, 1.76] & 12.5 & 6.0 & 10.5 & 6.9 \\
\bottomrule
\end{tabular}
\end{table}

\section{Implementation and Reproducibility}
\label{app:impl}
All code, configurations, data splits, trained adapters, run records and the
scripts that regenerate every figure and table of this paper are released at
\repourl. The training driver records, per run, the resolved configuration, the data
audit and file hashes, the view-cache fingerprint, library versions, device
information, the training plan, the full step log, the selection-split history,
the fitted calibrator and every per-item holdout prediction with its
probabilities. Adapters are saved with the recipe's \texttt{schema\_config.json}
contract, including the SHA-256 of the scoring-prompt implementation, so that a
checkpoint is rejected if the prompt code it was trained against has changed.
All figures and data tables in this paper are regenerated from these records by
a single script. A 4-bit NF4 configuration for 24\,GB GPUs is provided but its
results are not mixed with the BF16 numbers reported here; the only 4-bit
numbers in the paper are the supplementary $K$-sweep and the base-model
predictions used for the paired tests. Table~\ref{tab:hparams} lists every
hyper-parameter; none was tuned on the holdout.

\begin{table}[t]
\centering
\caption{Hyper-parameters of PACT (read from the resolved run configuration). The
Nimble recipe differs only in: 1 epoch, linear schedule, no rsLoRA, LoRA+ ratio 1, no extra
views or terms.}
\label{tab:hparams}
\setlength{\tabcolsep}{4pt}
\footnotesize
\rowcolors{2}{white}{rowgray}
\begin{tabular}{l >{\raggedright\arraybackslash}p{5.2cm}}
\toprule
\rowcolor{hdr}
\hd{Setting} & \hd{Value} \\
\midrule
Base model & Qwen3.5-9B, BF16, SDPA attention \\
Max. prompt length & 2,048 tokens (longest seen: 1,098) \\
LoRA & rank 16, $\alpha$=32, dropout 0.05, rsLoRA \\
LoRA targets & q,k,v,o,gate,up,down (language model) \\
Optimiser & AdamW, lr $5\times10^{-5}$, weight decay 0 \\
LoRA+ ratio & 8 (lr of $B$ / lr of $A$) \\
Schedule & cosine, warm-up 10\%, 2 epochs \\
Batch & 2 pairs $\times$ 2 accumulation = 8 primary rows \\
Gradient clipping & global norm 1.0 \\
Loss weights & $\lambda_{\mathrm{cf}}$=0.5, $\lambda_{\mathrm{pc}}$=0.5, $\lambda_{\mathrm{nr}}$=0.2, $\lambda_{\mathrm{emd}}$=0.3 \\
Margins & $m$=2, $\varepsilon$=0; ramp 10\% of steps \\
Length bucketing & 64 groups per bucket \\
Selection & every 25\% of training, lowest selection NLL \\
Evaluation & $K$=4 permutations, ECE with 15 bins \\
Calibration & contextual $T(x)$, L-BFGS, 120 iterations \\
Seeds & 17, 18, 19 (ablations: 17) \\
\bottomrule
\end{tabular}
\end{table}

\section{Applied Demonstration: Sequential Decisions in Tetris}
\label{app:tetris}
Everything above evaluates the adapter on a static document plus a schema
question. As a qualitative check of how general the serving interface is, we
reused it unmodified for a task with no document at all: choosing where to
place falling pieces in a from-scratch Tetris implementation. At every piece
the game engine---not the model---enumerates the legal placements and computes
their consequences two plies deep (rows cleared, holes, stack height and
bumpiness, now and after the best placement of the visible next piece; standard
Tetris features~\cite{thiery2009tetris}). Each option becomes one choice
description in an ordinary schema call; the model reads one token and the
highest-logit option is played.

\textbf{Policy-gradient adaptation.} Unmodified, the adapter does not reliably
choose the option the two-ply search ranks best: it was trained to classify
documents, not to trade off sequential consequences. We therefore further
fine-tuned the same LoRA adapter with single-step REINFORCE~\cite{williams1992reinforce}
(a contextual bandit over candidate placements), using the two-ply score of the
sampled placement as reward and the mean score over that state's candidates as
a per-state baseline, with an entropy bonus of 0.01, learning rate
$10^{-5}$ and 32 moves per update on 2$\times$40\,GB GPUs. Prompt construction
is shared between training and serving, so there is no train/serve skew.

\textbf{Results.} The sampled top-pick rate---agreement with the search's
best-ranked option---rose from 19\% at the first update to 87.5\% by step
16, and its 100-step average stayed between 88\% and 99\% from step 200 to the
end of the 1{,}574-step run (Fig.~\ref{fig:rl}a). The training game itself
never topped out after step 14, clearing 19{,}975 lines over roughly 50{,}000
pieces. Greedy evaluation with a 400-piece cap (Fig.~\ref{fig:rl}b) is flat at
a mean of 17{,}000--18{,}000 points and about 158 lines, close to the ceiling of
160 lines that 400 pieces permit, so this protocol saturates and cannot show
improvement; one uncapped greedy game at the final checkpoint reached 1{,}500
pieces and 598 lines (score 65{,}700) before it was stopped.

\textbf{Live play in the served app.} The repository (Appendix~\ref{app:impl})
includes a three-minute
recording at 2$\times$ speed (\texttt{media/pact\_rl\_tetris\_2x.mp4}) of the
final checkpoint driving the unmodified browser app through the ordinary
serving path, in 4-bit on a single 16\,GB consumer GPU. In six minutes of real
play the model cleared more than 130 lines in one continuous game and reached
level 13 without topping out, although gravity accelerates with every level and
each move requires a fresh forward pass.

\textbf{Reading this appendix.} This is a single run without seeds, held-out
repetitions or a greedy pre-training baseline, and the policy learns to follow
a ranking that the engine already computes. It demonstrates only that the
single-token interface is a general typed-choice interface and that the same
lightweight adapter can be re-targeted to a new decision task; it carries no
evidence about the holdout claims above.

\begin{figure}[t]
\centering
\includegraphics[width=\columnwidth]{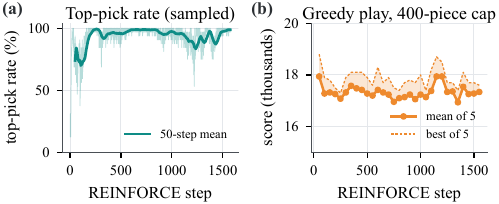}
\caption{REINFORCE on Tetris placement choices. (a) Share of sampled moves that
match the two-ply search's top-ranked option (faint: per update; bold: 50-step
mean). (b) Greedy evaluation every 50 steps, 5 games capped at 400 pieces
(line: mean; dashed: best game).}
\label{fig:rl}
\end{figure}

\end{document}